\DeclareMathOperator*{\argmin}{arg\,min}
\DeclareMathOperator*{\Argmin}{Arg\,min}
\let\NAT@parse\undefined
\newtheorem{remark}{Remark}
\newtheorem{assumption}{Assumption}
\newtheorem{lemma}{Lemma}
\newtheorem{theorem}{Theorem}
\title{\LARGE \bf
Decentralized Event-Triggered Federated Learning \\ with Heterogeneous Communication Thresholds
}
\author{Shahryar Zehtabi, Seyyedali Hosseinalipour, Christopher G. Brinton
\thanks{S. Zehtabi, S. Hosseinalipour and C. Brinton are with the School of Electrical and Computer Engineering, Purdue University, West Lafayette, IN, 47906 USA
        email:{\tt\small $\lbrace$szehtabi,hosseina,cgb$\rbrace$@purdue.edu}}
\thanks{This work was supported in part by the Office of Naval Research under grant N00014-21-1-2472}
}
\begin{document}

\maketitle
\thispagestyle{empty}
\pagestyle{empty}

\begin{abstract}
A recent emphasis of distributed learning research has been on federated learning (FL), in which model training is conducted by the data-collecting devices.
Existing research on FL has mostly focused on a star topology learning architecture with synchronized (time-triggered) model training rounds, where the local models of the devices are periodically aggregated by a centralized coordinating node.
However, in many settings, such a coordinating node may not exist, motivating efforts to fully decentralize FL.
In this work, we propose a novel methodology for distributed model aggregations via asynchronous, event-triggered consensus iterations over the network graph topology.
We consider heterogeneous communication event thresholds at each device that weigh the change in local model parameters against the available local resources in deciding the benefit of aggregations at each iteration.
Through theoretical analysis, we demonstrate that our methodology achieves asymptotic convergence to the globally optimal learning model under standard assumptions in distributed learning and graph consensus literature, and without restrictive connectivity requirements on the underlying topology. Subsequent numerical results demonstrate that our methodology obtains substantial improvements in communication requirements compared with FL baselines.
\end{abstract}

\section{Introduction}
Federated learning (FL) has emerged as a popular technique for distributing machine learning model training across network devices \cite{kairouz2021advances}. In the conventional FL architecture, a set of devices are connected to a central coordinating node (e.g., an edge server) in a star topology configuration. Devices conduct local model updates based on their individual datasets, and the coordinator periodically aggregates these into a global model, synchronizing the devices to begin the next round of training. Several works in the past few years have built functionality into this architecture to manage different types of network heterogeneity, including varying communication and computation abilities of devices and statistical properties of local datasets \cite{bonawitz2019towards,li2020federated}.

However, access to a central coordinating node is not always feasible/desirable. For instance, ad-hoc wireless networks serve as an efficient alternative for communication among devices in settings where device-to-server connectivity is energy intensive or unavailable \cite{chiang2016fog}. The proliferation of such settings motivates consideration of \textit{fully-decentralized FL}, where the model aggregation step, in addition to the data processing step, is distributed across devices. In this paper, we propose a cooperative learning approach for achieving this via consensus iterations over the available distributed graph topology, and analyze its convergence characteristics.

The central coordinator in FL is also typically employed for timing synchronization, i.e., determining the time between global aggregations. To overcome this, we consider an \textit{asynchronous, event-triggered communication framework} for distributed model consensus. Event-triggered communications can offer several benefits in this context. For one, the amount of redundant communications can be reduced by defining event triggering conditions based on the significance of each device's model update. Also, removing the assumption of devices communicating at every iteration opens the possibility of alleviating straggler issues \cite{hosseinalipour2020federated}. Third, we can improve computational efficiency at each device by limiting aggregations to only when new parameters are received.

\subsection{Related Work}

\subsubsection{Consensus-based distributed optimization}
There is a rich literature on distributed optimization over graphs via consensus algorithms, e.g., \cite{tsitsiklis1986distributed, nedic2009distributed, nedic2010constrained, pu2021distributed, nedic2014distributed, xin2018linear}. For connected, undirected graph topologies, symmetric and doubly-stochastic transition matrices can be constructed for consensus iterations. In typical approaches \cite{tsitsiklis1986distributed, nedic2009distributed, nedic2010constrained}, each device maintains a local gradient of the target system objective (e.g., error minimization), with the consensus matrices designed to satisfy additional convergence criteria outlined in \cite{boyd2004fastest, xiao2004fast}. More recently, gradient tracking optimization techniques have been developed where the global gradient is simultaneously learned alongside local parameters \cite{pu2021distributed}.

In this work, our focus is on decentralized FL, which adds two unique aspects to the distributed optimization problem. First, the local data distributions across devices for machine learning tasks are in general not independent and identically distributed (non-i.i.d.), which can have significant impacts on convergence \cite{hosseinalipour2020federated}. Second, we consider the realistic scenario in which the devices have heterogeneous resources \cite{bonawitz2019towards}.

\subsubsection{Resource-efficient federated learning}
Several recent works in FL have investigated techniques for improving communication and computation efficiency. A popular line of research has aimed to adaptively control the FL process based on device capabilities, e.g., \cite{nishio2019client,nguyen2020fast,diao2020heterofl,wang2019adaptive,gu2021fast}. \cite{wang2019adaptive} studies FL convergence under a total network resource budget, in which the server adapts the frequency of global aggregation iterations. Others \cite{nishio2019client,nguyen2020fast,gu2021fast} have considered FL under partial device participation, where the communication and processing capabilities of devices are taken into account when assessing which clients will participate in each training round. \cite{diao2020heterofl} remove the necessity that every local client needs to share the same global model as the server, allowing weaker clients to take smaller subsets of the model to optimize.

Different from these works, we focus on novel learning topologies for decentralized FL. In this respect, a few recent works \cite{savazzi2020federated,lalitha2019peer,hosseinalipour2022multi,lin2021semi} have proposed peer-to-peer (P2P) communication approaches for collaborative learning over local device topologies. \cite{hosseinalipour2022multi,lin2021semi} investigated a semi-decentralized FL methodology across hierarchical networks, where local model aggregations are conducted via P2P-based cooperative consensus formation to reduce the frequency of global aggregations by the coordinating node. In our work, we consider the fully decentralized setting, where a central node is not available, as in \cite{savazzi2020federated,lalitha2019peer}: alongside local model updates, devices conduct consensus iterations with their neighbors in order to gradually minimize the global machine learning loss in a distributed manner. Different from \cite{savazzi2020federated,lalitha2019peer}, our methodology incorporates asynchronous event-triggered communications among devices, where local resource levels are factored in to the event thresholds to account for device heterogeneity. We will see that this approach leads to substantial improvements in model convergence time compared with non-heterogeneous thresholding.

\vspace{-1mm}
\subsection{Outline and Summary of Contributions} \label{sec:intro:sub:contributions}

\begin{itemize}
\item We develop a novel methodology for fully decentralizing FL, with model aggregations occurring via cooperative model consensus iterations (Sec.~\ref{sec:method}). In our methodology, communications are asynchronous and event-driven. With event thresholds defined to incorporate local model evolution and resource availability, our methodology adapts to device communication and processing limitations in heterogeneous networks.

\item We provide a convergence analysis of our methodology, which shows that each device arrives at the globally optimal learning model asymptotically under standard assumptions for distributed learning (Sec.~\ref{sec:conv}). This result is obtained without overly restrictive connectivity assumptions on the underlying communication graph. Our analysis also leads to guardrails for the event-triggering conditions to ensure convergence.

\item We conduct numerical experiments comparing our methodology to baselines in decentralized FL and a randomized gossip algorithm on a real-world machine learning dataset (Sec.~\ref{sec:simulation}). We show that our method is able to reduce model training communication time substantially compared to FL baselines. Additionally, we find that the convergence rate of our method scales well with consensus graph connectivity.
\end{itemize}

\vspace{-1mm}
\section{Methodology and Algorithm}
\label{sec:method}
In this section, we develop our methodology for decentralized FL with event-triggered communications. After discussing preliminaries of the learning model in FL (Sec.~\ref{ssec:FL}), we present our cooperative consensus algorithm for distributed model aggregations (Sec.~\ref{ssec:event}). Finally, we remark on hyperparameters introduced in our algorithm (Sec.~\ref{ssec:remark}).

\subsection{Device and Learning Model}
\label{ssec:FL}
We consider a system of $m$ devices/nodes, collected via the set $\mathcal{M}$, which are engaged in distributed training of a machine learning model. Under the FL framework, each device $i\in\mathcal{M}$ trains a local model $\mathbf{w}_i$ using its own generated dataset $\mathcal{D}_i$. Each data point $\xi\triangleq(\mathbf{x}_\xi, y_\xi)\in\mathcal{D}_i$ consists of a feature vector $\mathbf{x}_\xi$ and (optionally, in the case of supervised learning) a target label $y_\xi$. The performance of the local model is measured via the
local loss $F_i{\left( . \right)}$:
\vspace{-1mm}
 \begin{equation}
     F_i{\left( \mathbf{w} \right)} = \sum_{\xi\in\mathcal{D}_i}\ell_\xi\left(\mathbf{w} \right),
     \vspace{-1mm}
 \end{equation}
where $\ell_\xi\left(\mathbf{w}_i \right)$ is the loss of the model on datapoint $\xi$ (e.g., squared prediction error) under parameter realization $\mathbf{w}\in \mathbb{R}^n$, with $n$ denoting the dimension of the target model.
The global loss is defined in terms of these local losses as
\vspace{-1mm}
 \begin{equation}
     F{\left(\mathbf{w} \right)} = \frac{1}{\left|\mathcal{M}  \right|}\sum_{i\in\mathcal{M}}{F_i{\left( \mathbf{w} \right)}}.
     \vspace{-1mm}
 \end{equation}
 
The goal of the training process is to find an optimal parameter vector $\mathbf{w}^\star$ which minimizes the global loss function, i.e., $\mathbf{w}^\star=\argmin_{\mathbf{w}\in\mathbb{R}^n} F{\left( \mathbf{w} \right)}$.
In the distributed setting, we desire $\mathbf{w}_1 = \cdots = \mathbf{w}_m = \mathbf{w}^\star$, which requires a synchronization mechanism. In conventional FL, synchronization is conducted periodically by a central coordinator globally aggregating the local models. However, in this work, we are interested in settings where no such central node exists. Thus, alongside using optimization techniques to minimize local loss functions, we must desire a technique to reach consensus over the parameters in our decentralized FL scheme.

To accomplish this, we propose event-triggered FL with heterogeneous communication thresholds ({\tt EF-HC}). In {\tt EF-HC}, devices conduct peer-to-peer (P2P) communications during the model training period to synchronize their locally trained models and avoid overfitting to their local datasets. The overall {\tt EF-HC} algorithm is given in Alg.~\ref{alg:eventBased}. Two model parameter vectors are kept at each device $i$: (i) its instantaneous \textit{main} model parameters $\mathbf{w}_i$, and (ii) the \textit{auxiliary} model parameters $\mathbf{\widehat{w}}_i$, which is the outdated version of its main parameters that had been broadcast to the neighbors. Decentralized ML is conducted over the (time-varying, undirected) device graph through a sequence of four events detailed in Sec.~\ref{ssec:event}. Although in our distributed setup there is no physical notion of a global iteration, we introduce the iteration variable $k$ for analysis purposes.

\subsection{Model Updating and Event-Triggered Communications}
\label{ssec:event}
We consider the physical network graph $\mathcal{G}^{(k)} = \left( \mathcal{M}, \mathcal{E}^{(k)} \right)$ among devices, where $\mathcal{E}^{(k)}$ is the set of edges available at iteration $k$ in the underlying time-varying communication graph. We assume that link availability varies over time according to the underlying communication protocol in place \cite{hosseinalipour2020federated}. In each iteration, some of these edges are employed for transmission/reception of model parameters between devices. To represent this process, we define the information flow graph $\mathcal{G}'^{(k)} = \left( \mathcal{M}, \mathcal{E}'^{(k)} \right)$, which is a subgraph of $\mathcal{G}^{(k)}$. $\mathcal{E}'^{(k)}$ only contains those links in $\mathcal{E}^{(k)}$ that are being used at iteration $k$ to exchange parameters. Based on this, we denote the neighbors of device $i$ at iteration $k$ as $\mathcal{N}_i^{(k)} = \lbrace  j :  (i,j)\in\mathcal{E}^{(k)}\, , j \in \mathcal{M}\}$, with node degree $d_i^{(k)} = |\mathcal{N}_i^{(k)}|$. We also denote neighbors of $i$ which are directly communicating with it at iteration $k$ as $\mathcal{N}'^{(k)}_i = \lbrace  j :  (i,j)\in\mathcal{E}'^{(k)}\, , j \in \mathcal{M}\}$. Additionally, the aggregation weight associated with the link $(i, j) \in \mathcal{E}^{(k)}$ and $(i, j) \in \mathcal{E}'^{(k)}$ are defined as $\beta_{ij}^{(k)}$ and $p_{ij}^{(k)}$ respectively, with $p_{ij}^{(k)} = \beta_{ij}^{(k)}$ if the link $(i, j)$ is used for aggregation at iteration $k$, and $p_{ij}^{(k)} = 0$ otherwise.

In {\tt EF-HC}, there are four types of communication events:

\textbf{\textit{Event 1: Neighbor connection.} } The first event (lines \ref{event:conn:begin}-\ref{event:conn:end} of Alg. \ref{alg:eventBased}) is triggered at device $i$ if new devices connect to it or existing devices disconnect from it due to the time-varying nature of the graph. In this event, model parameters $\mathbf{w}_i^{(k)}$ and the degree of device $i$ at that time $d_i^{(k)}$ are exchanged with this new neighbor. Consequently, this results in an aggregation event (Event 3) at both devices.

\textbf{\textit{Event 2: Broadcast.} } Second, if the normalized difference between $\mathbf{w}_i^{(k)}$ and $\mathbf{\widehat{w}}_i^{(k)}$ at device $i$ is greater than a \textit{threshold} value $r \rho_i \gamma^{(k)}$, i.e., ${( \frac1n )}^\frac1q {\| \mathbf{w}_i^{(k)} - \mathbf{\widehat{w}}_i^{(k)} \|}_q \geq r \rho_i \gamma^{(k)}$, then a broadcast event (lines \ref{event:broadcast:begin}-\ref{event:broadcast:end} of Alg. \ref{alg:eventBased}) is triggered at that device. In other words, communication at a device is triggered once the instantaneous local model is sufficiently different than the outdated local model. When this event triggers, device $i$ broadcasts its parameters $\mathbf{w}_i^{(k)}$ and its instantaneous degree $d_i^{(k)}$ to all of its neighbors and receives the same information from them.

The threshold $r \rho_i \gamma^{(k)}$ is treated as heterogeneous across devices $i\in\mathcal{M}$, to assess whether the gain from a consensus iteration on the instantaneous main models at the devices will be worth the induced network resource utilization. Specifically: (i) $r > 0$ is a scaling hyperparameter value; (ii) $\gamma^{(k)} > 0$ is a decaying factor that accounts for smaller expected variations in the local models over time; and (iii) $\rho_i$ quantifies the resource availability of device $i$.
Developing the threshold measure ${( \frac1n )}^\frac1q {\| \mathbf{w}_i^{(k)} - \mathbf{\widehat{w}}_i^{(k)} \|}_q$ and the condition $r \rho_i \gamma^{(k)}$ is one of the contributions of this paper relative to existing event-triggered schemes~\cite{george2020distributed}. For example, in a bandwidth limited environment, the transmission delay of model transfer will be inversely proportional to the bandwidth among two devices. Thus, to decrease the latency of model training,
$\rho_i$ can be defined inversely proportional to the bandwidth, promoting lower frequency of communications at the devices with less available bandwidth. In {\tt EF-HC}, we set $\rho_i \propto \frac1{b_i}$, where $b_i$ is the average bandwidth on outgoing links of device $i$. Further details on choosing the broadcast threshold are given in Sec.~\ref{ssec:remark}.

\textbf{\textit{Event 3: Aggregation.} } Following a broadcast event (Event 2) or a neighbor connection event (Event 1) at device $i$, an aggregation event (lines \ref{event:agg:begin}-\ref{event:agg:end} of Alg. \ref{alg:eventBased}) is triggered at device $i$ and all of its neighbors. This aggregation is carried out through a distributed weighted averaging consensus method~\cite{xiao2004fast} as $\mathbf{w}_i^{(k+1)} = \mathbf{w}_i^{(k)} + \sum_{j \in \mathcal{N}'^{(k)}_i}{\beta_{ij}^{(k)} \left( \mathbf{w}_j^{(k)} - \mathbf{w}_i^{(k)} \right)}$,
where $\beta_{ij}^{(k)}$ is the aggregation weight that device $i$ will assign to parameters received from device $j$ at iteration $k$. The aggregation weights $\{\beta_{ij}^{(k)}\}$ for graph $\mathcal{G}^{(k)}$ can be selected based on degree information of the neighbors, as will be discussed in Sec.~\ref{sec:convergence:sub:assumptions}.

\textbf{\textit{Event 4: Gradient descent.} } Each device $i$ conducts stochastic gradient descent (SGD) iterations for local model training. Formally, device $i$ obtains $\mathbf{w}_i^{(k+1)}= \mathbf{w}_i^{(k)} - \alpha^{(k)} \mathbf{g}_i{( \mathbf{w}_i^{(k)} )}$, where
$\alpha^{(k)}$ is the learning rate and $\mathbf{g}_i{ \mathbf{w}_i^{(k)} )}$ is the stochastic gradient approximation defined as $\mathbf{g}_i{( \mathbf{w}_i^{(k)} )}= \frac1{| \mathcal{S}_i^{(k)} |} \sum_{\mathbf{\xi} \in \mathcal{S}_i^{(k)}} \nabla \ell_\xi{( \mathbf{w}_i^{(k)} )}$. Here, $\mathcal{S}_i^{(k)}$ denotes the set of data points (mini-batch) used to compute the gradient, chosen uniformly at random from the local dataset.

\begin{algorithm}[t]
    \small
	\caption{{\tt EF-HC} procedure for device $i$.}
	\label{alg:eventBased}
	\textbf{Input:} $K, q$\\
	Initialize $k=0$, $\mathbf{w}_i^{(0)} = \mathbf{\widehat{w}}_i^{(0)}$
	\begin{algorithmic}[1]
		\While{$k\leq K$}
		
			\Comment{\textbf{Event 1. } Neighbor Connection Event}
			\If {device $j$ is connected to device $i$} \label{event:conn:begin}
				\State device $i$ appends device $j$ to its list of neighbors
				\State device $i$ sends $\mathbf{w}_i^{(k)}$ and $d_i^{(k)}$ to device $j$
				\State device $i$ receives $\mathbf{w}_j^{(k)}$ and $d_j^{(k)}$ from device $j$
			\ElsIf {device $j$ is disconnected from device $i$}
				\State device $i$ removes device $j$ from its list of neighbors
			\EndIf \label{event:conn:end}
		
			\Comment{\textbf{Event 2. } Broadcast Event}
			\If {${\left( \frac1n \right)}^\frac1q {\left\| \mathbf{w}_i^{(k)} - \mathbf{\widehat{w}}_i^{(k)} \right\|}_q \geq r \rho_i \gamma^{(k)}$} \label{event:broadcast:begin}
				\State device $i$ broadcasts $\mathbf{w}_i^{(k)}$, $d_i^{(k)}$ to all neighbors $j\in\mathcal{N}_i^{(k)}$
				\State device $i$ receives $\mathbf{w}_j^{(k)}$, $d_j^{(k)}$ from all neighbors $j\in\mathcal{N}_i^{(k)}$
				\State $\mathbf{\widehat{w}}_i^{(k+1)} = \mathbf{w}_i^{(k)}$
			\EndIf \label{event:broadcast:end}
			
			\Comment{\textbf{Event 3. } Aggregation Event}
			\If {updated parameters $\mathbf{w}_j^{(k)}$ and $d_j^{(k)}$ received from neighbor $j$} \label{event:agg:begin}
				\State $\mathbf{w}_i^{(k+1)} = \mathbf{w}_i^{(k)} + \sum_{j \in \mathcal{N}'^{(k)}_i}{\beta_{ij}^{(k)} \left( \mathbf{w}_j^{(k)} - \mathbf{w}_i^{(k)} \right)} $
			\EndIf \label{event:agg:end}
			
			\Comment{\textbf{Event 4. } Gradient Descent Event}
			\State device $i$ conducts SGD iteration $\mathbf{w}_i^{(k+1)} = \mathbf{w}_i^{(k)} - \alpha^{(k)} \mathbf{g}_i{\left( \mathbf{w}_i^{(k)} \right)}$

			\State $k \leftarrow k+1$ \label{event:sgd:end}
		\EndWhile
	\end{algorithmic}
\end{algorithm}

\vspace{-1mm}
\section{Convergence Analysis}
\label{sec:conv}
In this section, we first present our main theoretical result in this paper (Sec.~\ref{subsec:mainRes}). We then enumerate and discuss the assumptions needed to obtain the main result (Sec.~\ref{sec:convergence:sub:assumptions}).

\vspace{-1mm}
\subsection{Main Convergence Result}\label{subsec:mainRes}
We first obtain the convergence characteristics of {\tt EF-HC}. We reveal that (a) all devices reach consensus asymptotically, i.e., each device $i$'s model $\mathbf{w}_i^{(k)}$ converges to $\mathbf{\bar{w}}^{(k)}=\frac1m \sum_{i=1}^m{\mathbf{w}_i^{(k)}}$ as $k \to \infty$, and (b) the final model across the devices (i.e., $\mathbf{\bar{w}}^{(k)}, k\rightarrow \infty$) minimizes the global loss.
\begin{theorem} \label{theorem:main}
 Under the standard distributed learning assumptions in Sec.~\ref{sec:convergence:sub:assumptions}, model training under {\tt EF-HC} satisfies the following convergence behaviors:
 \vspace{-1mm}
    \begin{enumerate}[label=(\alph*)]
        \item
        $
            \lim_{k \to \infty}{{\| \mathbf{w}_i^{(k)} - \mathbf{\bar{w}}^{(k)} \|}_2} = 0$
        for all $i$, where $\mathbf{\bar{w}}^{(k)} = \frac1m \sum_{i=1}^m{\mathbf{w}_i^{(k)}}$, and \label{theorem:main:consensus}
        
        \item $\lim_{k \to \infty}{F{\left( \mathbf{\bar{w}}^{(k)} \right)} - F^\star} = 0$. \label{theorem:main:optimization}
    \end{enumerate}
\end{theorem}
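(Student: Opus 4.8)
The plan is to recast the coupled aggregation--SGD update as a single linear recursion driven by two vanishing perturbations, and then separate the consensus and optimality claims. First I would stack the local models row-wise into $\mathbf{W}^{(k)} = [\mathbf{w}_1^{(k)}, \ldots, \mathbf{w}_m^{(k)}]^\top$ and write one round of Events~3--4 as $\mathbf{W}^{(k+1)} = \mathbf{P}^{(k)} \mathbf{W}^{(k)} - \alpha^{(k)} \mathbf{G}^{(k)}$, where $\mathbf{P}^{(k)}$ is the consensus matrix whose off-diagonal entries are the active weights $\beta_{ij}^{(k)}$ and $\mathbf{G}^{(k)}$ stacks the stochastic gradients $\mathbf{g}_i(\mathbf{w}_i^{(k)})$. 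The crucial structural fact, supplied by the weight construction in Sec.~\ref{sec:convergence:sub:assumptions}, is that each $\mathbf{P}^{(k)}$ is symmetric and doubly stochastic; left-multiplying by $\frac1m \mathbf{1}^\top$ then kills the consensus term and yields the clean average dynamic $\mathbf{\bar{w}}^{(k+1)} = \mathbf{\bar{w}}^{(k)} - \frac{\alpha^{(k)}}{m} \sum_{i} \mathbf{g}_i(\mathbf{w}_i^{(k)})$, which I will use for part~\ref{theorem:main:optimization}.

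For part~\ref{theorem:main:consensus} I would track the disagreement vector $\mathbf{e}_i^{(k)} = \mathbf{w}_i^{(k)} - \mathbf{\bar{w}}^{(k)}$. Double stochasticity together with the joint-connectivity assumption on the effective (event-triggered) graph makes the product of the $\mathbf{P}^{(k)}$ over any bounded window a strict contraction on the disagreement subspace orthogonal to $\mathbf{1}$, which is the standard spectral mechanism for consensus over time-varying topologies. The two forces pushing against this contraction are both controllably small: the gradient step injects a term of order $\alpha^{(k)}$ (bounded through the gradient bound in the assumptions), while the event-triggering rule guarantees that between broadcasts each device's drift satisfies $\|\mathbf{w}_i^{(k)} - \mathbf{\widehat{w}}_i^{(k)}\| < r \rho_i \gamma^{(k)} n^{1/q}$, so the disagreement that can accumulate before a communication is forced is of order $\gamma^{(k)}$. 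Telescoping the contraction against these two diminishing inputs and invoking the standard fact that a vanishing perturbation passed through a uniformly contracting recursion vanishes, I would conclude $\|\mathbf{e}_i^{(k)}\|_2 \to 0$.

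For part~\ref{theorem:main:optimization} I would feed the consensus result into the average dynamic. Decomposing $\frac1m \sum_i \mathbf{g}_i(\mathbf{w}_i^{(k)}) = \nabla F(\mathbf{\bar{w}}^{(k)}) + \mathbf{b}^{(k)}$, the bias $\mathbf{b}^{(k)}$ is controlled by $L$-smoothness and the disagreement $\max_i \|\mathbf{e}_i^{(k)}\|$, with the zero-mean minibatch noise handled in expectation; hence $\mathbf{\bar{w}}^{(k)}$ obeys a perturbed gradient descent on the convex global loss $F$. Applying the convex descent inequality, summing over $k$, and using the diminishing step-size conditions $\sum_k \alpha^{(k)} = \infty$ and $\sum_k (\alpha^{(k)})^2 < \infty$ (with the bias terms summable against $\alpha^{(k)}$ because they decay) forces $F(\mathbf{\bar{w}}^{(k)}) - F^\star \to 0$.

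The hard part will be part~\ref{theorem:main:consensus}, specifically disentangling the two sources of time variation in $\mathbf{P}^{(k)}$: the exogenous link changes and the \emph{endogenous} broadcast times, which depend on the very model trajectory the proof is trying to bound. The delicate point is to show that the decaying threshold $\gamma^{(k)}$ does not throttle communication so severely that joint connectivity---and hence the per-window contraction---is lost; establishing that broadcasts remain frequent enough relative to the contraction window, while the drift they tolerate still vanishes, is the crux. Once the disagreement is shown to vanish, part~\ref{theorem:main:optimization} reduces to a textbook diminishing-step-size stochastic-approximation argument.
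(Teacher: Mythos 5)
Your overall architecture matches the paper's: the same matrix recursion $\mathbf{W}^{(k+1)} = \mathbf{P}^{(k)}\mathbf{W}^{(k)} - \alpha^{(k)}\mathbf{G}^{(k)}$, the same use of double stochasticity to isolate the average dynamics, consensus via contraction of doubly stochastic products over jointly connected windows, and a perturbed convex-descent argument on $\mathbf{\bar{w}}^{(k)}$. The difference is that the paper does not prove the two technical steps from scratch; it imports them as Lemmas 8 and 6 of Nedi\'{c} and Ozdaglar (\emph{Constrained consensus and optimization in multi-agent networks}), after verifying that Assumptions 1--6 put the {\tt EF-HC} iteration in the form those lemmas require. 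Your plan to establish the per-window contraction directly is the standard proof of that cited lemma, so this part is sound, if longer than necessary.

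There are two genuine problems. First, for part (b) you control the bias $\mathbf{b}^{(k)} = \frac1m\sum_i \mathbf{g}_i(\mathbf{w}_i^{(k)}) - \nabla F(\mathbf{\bar{w}}^{(k)})$ via $L$-smoothness of the $F_i$, but smoothness is not among the paper's assumptions: Assumption 4 bounds $\|\nabla F_i\|$, which gives Lipschitz continuity of the \emph{function values}, not of the gradients. The paper's route (Lemma 6 of the cited work) avoids this by never comparing gradients at different points: it applies the convexity inequality to each $F_i$ at $\mathbf{w}_i^{(k)}$ and then bounds $|F_i(\mathbf{\bar{w}}^{(k)}) - F_i(\mathbf{w}_i^{(k)})| \le L\|\mathbf{\bar{w}}^{(k)} - \mathbf{w}_i^{(k)}\|_2$ using only the gradient bound, producing the summable term $\frac{4L}{m}\alpha^{(k)}\sum_j\|\mathbf{\bar{w}}^{(k)} - \mathbf{w}_j^{(k)}\|_2$ which is finite by Lemma 1(b). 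You should restructure your bias estimate in terms of function values so that it survives on the stated hypotheses. Second, the step you single out as the crux --- showing that the decaying threshold does not throttle communication so much that joint connectivity of the information-flow graph is lost --- is not proved in the paper at all; it is Assumption 6(b), which directly postulates that every device's triggering condition fires at least once every $B_2$ iterations (with Remark 3 giving heuristic guidance on choosing $\gamma^{(k)}$ relative to $\alpha^{(k)}$ so that this is plausible). Attempting to derive this from the dynamics would require bounding the model drift from below, which the stated assumptions do not support; you should invoke the assumption rather than try to prove it, since the endogeneity of the broadcast times is exactly what it is there to neutralize.
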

\begin{proof}
See Appendix~\ref{ssec:proof}.
\end{proof}

\vspace{-1mm}
\subsection{Assumptions for Theorem 1} \label{sec:convergence:sub:assumptions}

\begin{assumption}[Simultaneous information exchange] \label{assump:simultaneousInfo}
The devices exchange information simultaneously: if device $j$ communicates with device $i$ at some time, device $i$ also communicates with device $j$ at that same time.
\end{assumption}

\begin{assumption}[Transition weights] \label{assump:weights}
Let $\{p_{ij}^{(k)}\}$ be the set of aggregation weights in the information graph $\mathcal{G}'(k)$. $p_{ij}^{(k)}$ is the transition weight that device $i$ utilizes to aggregate device $j$'s parameters at iteration $k$:
\vspace{-1mm}
\begin{equation} \label{eqn:pDefinition}
	p_{ij}^{(k)} =
	\begin{cases}
		\beta_{ij}^{(k)} v_{ij}^{(k)} & \quad i \neq j
		\\
		1 - \sum_{j=1}^m{\beta_{ij}^{(k)} v_{ij}^{(k)}} & \quad i = j
	\end{cases},
	\vspace{-1mm}
\end{equation}
where $v_i^{(k)}$ indicates whether a broadcast event has occurred at device $i$ at iteration $k$:
\vspace{-1mm}
\begin{equation} \label{eqn:indicatorSignal}
	\begin{gathered}
		v_i^{(k)} =
		\begin{cases}
			1 & {\left( \frac1n \right)}^\frac1q {\left\| \mathbf{e}_i^{(k)} \right\|}_q > r \rho_i \gamma^{(k)}
			\\
			0 & \text{o.w.}
		\end{cases}, 
		\\
		\mathbf{e}_i^{(k)} = \mathbf{w}_i^{(k)} - \mathbf{\widehat{w}}_i^{(k)}, \quad \rho_i = \frac1{b_i},
		\\
		v_{ij}^{(k)} =
		\begin{cases}
			\max{\lbrace v_i^{(k)}, v_j^{(k)} \rbrace} & j \in \mathcal{N}_i^{(k)}
			\\
			0 & \text{o.w.}
		\end{cases}.
	\end{gathered}
\end{equation}
The following conditions must hold:
	\begin{enumerate}[label=(\alph*)]
		\item (Non-negative weights) There exists a scalar $\eta$, $0 < \eta < 1$, such that $\forall i \in \mathcal{M}$, we have
		\begin{enumerate}[label=(\roman*)]
			\item $p_{ii}^{(k)} \geq \eta$ and $p_{ij}^{(k)} \geq \eta$ for all $k \geq 0$ and all neighbor devices $j\in\mathcal{N}'^{(k)}_i$.
			\item $p_{ij}^{(k)} = 0$, if $j\notin\mathcal{N}'^{(k)}_i$.
		\end{enumerate} \label{assump:weights:eta}
		
		\item (Doubly-stochastic weights) The rows and columns of matrix $\mathbf{P}^{(k)} = [p_{ij}^{(k)}]$ are both stochastic, i.e., $\sum_{j=1}^m{p_{ij}^{(k)}} = 1$, $\forall i$, and $\sum_{i=1}^m{p_{ij}^{(k)}} = 1$, $\forall j$. \label{assump:weights:doublystoch}
		
		\item (Symmetric weights) $p_{ij}^{(k)} = p_{ji}^{(k)}$, $\forall i,k$ and $p_{ii}^{(k)} = 1 - \sum_{j \neq i} {p_{ij}^{(k)}}$. \label{assump:weights:symmetric}
	\end{enumerate}
\end{assumption}

Considering the conditions mentioned in Assumption \ref{assump:weights}, and the definition of $p_{ij}^{(k)}$ in~\eqref{eqn:pDefinition}, a choice of parameters $\beta_{ij}^{(k)}$ that satisfy these assumptions are as follows:
\begin{equation} \label{eqn:betaDefinition}
	\beta_{ij}^{(k)} = \min{\left \lbrace \frac1{1 + d_i^{(k)}}, \frac1{1 + d_j^{(k)}} \right \rbrace},
\end{equation}
which is inspired by the Metropolis-Hastings algorithm \cite{boyd2004fastest}. Note that $p_{ij}^{(k)}$ also depends on $v_{ij}^{(k)}$, which was defined in~\eqref{eqn:indicatorSignal}.

\begin{assumption}[Convexity] \label{assump:convexity}

	\begin{enumerate}[label=(\alph*)]
		\item The local objective function at each device $i$, i.e.,  $F_i$, is convex:
		\begin{equation*}
		\hspace{-3mm}	F_i{\left( \mathbf{{w}'} \right)} \geq F_i{\left( \mathbf{w} \right)} + {\nabla F_i{\left( \mathbf{w} \right)}}^\top \left( \mathbf{{w}'} - \mathbf{w} \right),
		\end{equation*}
		$\forall \left( \mathbf{{w}'}, \mathbf{w} \right) \in \mathbb{R}^n \times \mathbb{R}^n$.
\\[-0.05in]
		\item The global objective function $F{\left( \mathbf{w} \right)} = \frac1m \sum_{i=1}^m{F_i{\left( \mathbf{w} \right)}}$ is convex, and thus has a non-empty minimizer set denoted by $\mathbf{W}^\star = \Argmin_{\mathbf{w} \in \mathbb{R}^n}{F{\left( \mathbf{w} \right)}}$, such that $F^\star = F{\left( \mathbf{w}^\star \right)}$ for any $\mathbf{w}^\star \in \mathbf{W}^\star$.
	\end{enumerate}
\end{assumption}

\begin{assumption}[Bounded gradients] \label{assump:boundedGradients}
	The gradient of each loss function $F_i$ is bounded, i.e., there exists a scalar $L_i > 0$ such that $\forall i\in\mathcal{M},~\mathbf{w} \in \mathbb{R}^n$,
    	\begin{equation*}
    		{\left\| \nabla F_i{ \left( \mathbf{w} \right)} \right\|}_2 \le L_i \le L,
    	\end{equation*}
    	where $L = \max_{i \in \mathcal{M}}{L_i}$. We define $L_\infty$ as the bound for the infinity norm of all $F_i$'s, i.e., ${\left\| \nabla F_i{\left( \mathbf{w} \right)} \right\|}_\infty \le L_\infty$, $\forall i$.
\end{assumption}

\begin{assumption}[Step sizes] \label{assump:stepsizes}
	All devices use the same step size for model training, which is diminishing over time and satisfies the following conditions:
		\begin{equation*}
			\lim_{k \to \infty}{\alpha^{(k)}} = 0, \quad \sum_{k=0}^\infty{\alpha^{(k)}} = \infty, \quad \sum_{k=0}^\infty{{\left( \alpha^{(k)} \right)}}^2 < \infty.
		\end{equation*}
\end{assumption}
In particular, setting $\alpha^{(k)} = \frac{a}{{\left( b+k \right)}^c}$ meets the criteria of the above assumption if $c \in (0.5, 1]$.

The previous assumptions are common in literature~\cite{wang2019adaptive,hosseinalipour2022multi}. In the next assumption, we introduce a relaxed version of graph connectivity requirements relative to existing work in distributed learning, which underscores the difference of our decentralized event-triggered FL method compared with traditional distributed optimization algorithms.

\begin{figure*}[t]
\vspace{-0mm} 
	\begin{subfigure}[b]{0.5\textwidth}
		\begin{center}
			\includegraphics[width=\textwidth]{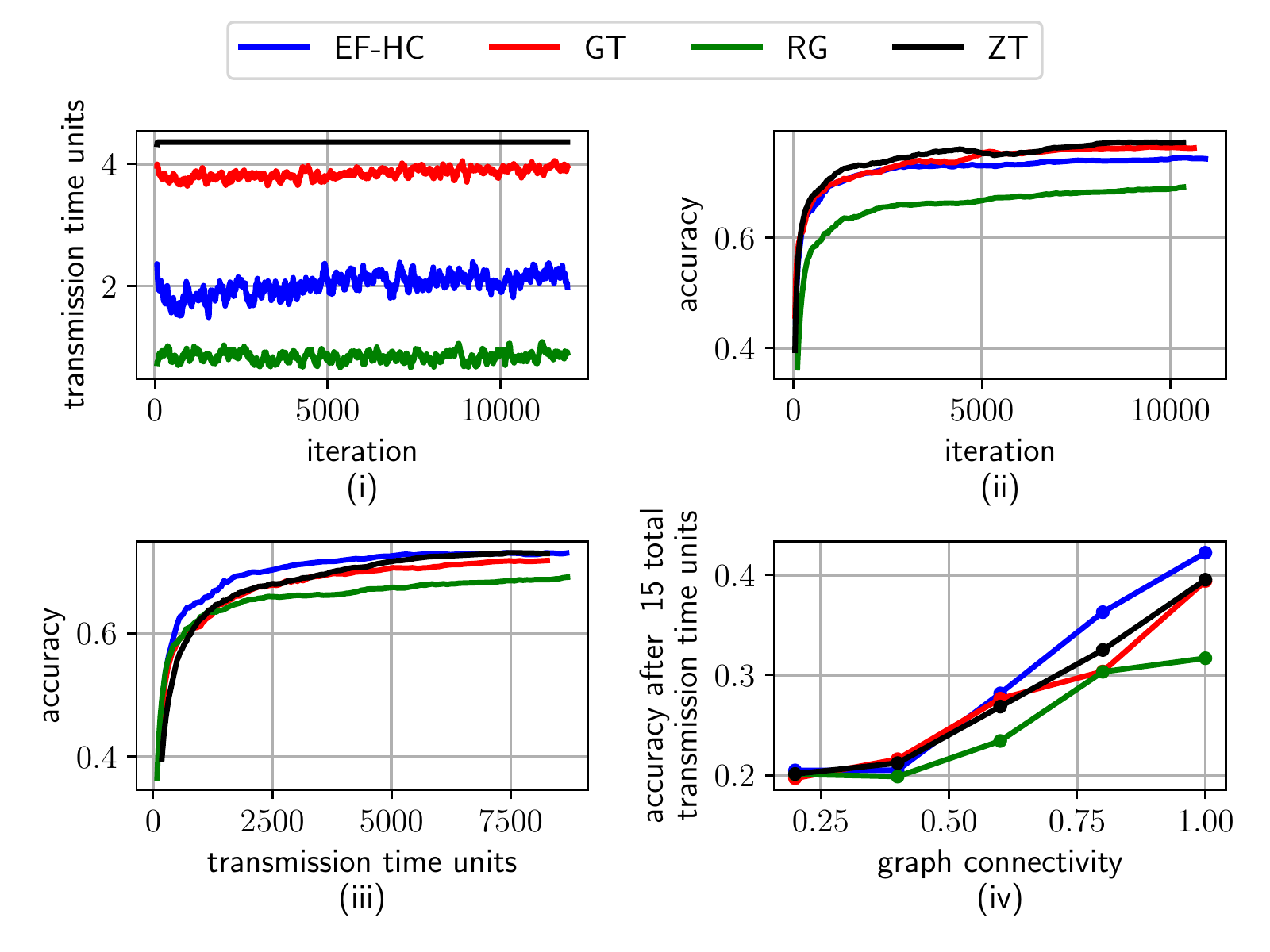}
		\end{center}
		\vspace{-5mm} 
		\caption{SVM classifier}
		\label{fig:sim:svm}
	\end{subfigure}
	\begin{subfigure}[b]{0.5\textwidth}
		\begin{center}
			\includegraphics[width=\textwidth]{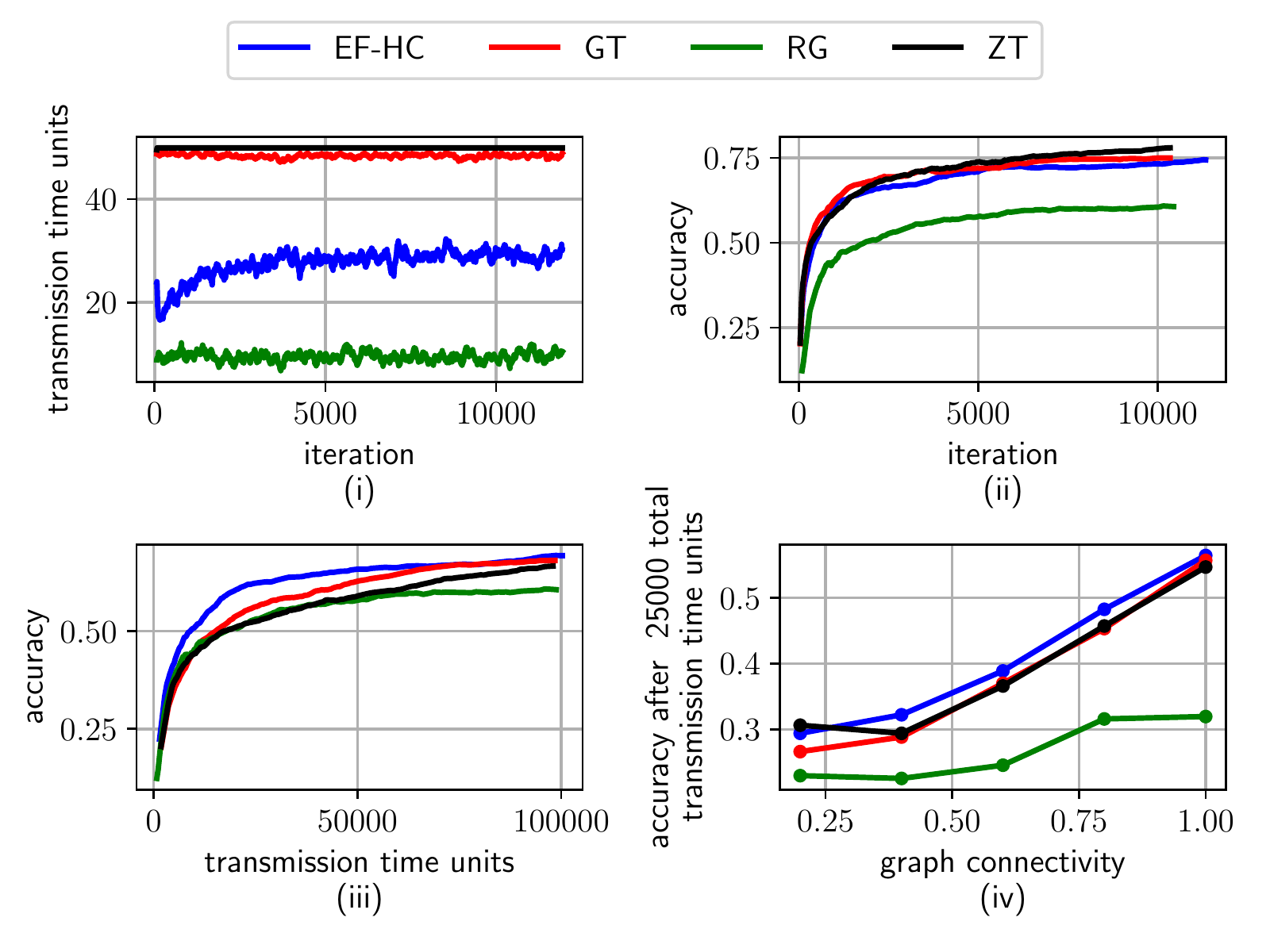}
		\end{center}
			\vspace{-5mm} 
		\caption{LeNet5 classifier}
		\label{fig:sim:lenet5}
	\end{subfigure}
	\vspace{-5mm}
	\caption{Performance comparison between our method ({\tt EF-HC}), global threshold (GT), zero threshold (ZT), and randomized gossip (RG) algorithms. The plots show (i) transmission time per iteration, (ii) accuracy per iteration, (iii) accuracy per transmission time, and (iv) accuracy after a certain number of transmissions with respect to graph connectivity.}
	\label{fig:sim}
		\vspace{-2mm}
\end{figure*}

\begin{assumption}[Network graph connectivity] \label{assump:connectivity}
	The physical network graph $\mathcal{G}^{(k)} = \left( \mathcal{M}, \mathcal{E}^{(k)} \right)$ satisfies the following:
	\begin{enumerate}[label=(\alph*)]
		\item There exists an integer $B_1 \geq 1$ such that the graph union of $\mathcal{G}^{(k)}$ from any arbitrary iteration $k$ to $k + B_1 - 1$, i.e., $\mathcal{G}^{( k : k+B_1-1 )} = {\left( \mathcal{M}, \cup_{s=0}^{B_1 - 1}{\mathcal{E}^{( k+s )}} \right)}$, is connected for any $k \geq 0$. \label{assump:conn:physicalconn}
		
		\item There exists an integer $B_2 \geq 1$ such that for every device $i$, triggering conditions for the broadcasting event occurs at least once every $B_2$ consecutive iterations $\forall k \geq 0$. This is equivalent to the following condition:
		\begin{equation*}
		\exists B_2\geq 1, \forall i:	\max{\lbrace v_i^{(k)},  v_i^{(k+1)}, \cdots, v_i^{( k+B_2-1 )} \rbrace} = 1.
		\end{equation*} \label{assump:conn:boundedintercom}
	\end{enumerate}
\end{assumption}

\vspace{-6mm}

Together, \ref{assump:conn:physicalconn} and \ref{assump:conn:boundedintercom} imply that each device $i$ broadcasts its information to its neighboring devices at least once every $B$ consecutive iterations, where $B = \left( l+2 \right) B_1$ in which $l B_1 < B_2 \le \left( l+1 \right) B_1$.\footnote{Note that in Algorithm \ref{alg:eventBased}, once two unconnected devices become connected, they exchange their parameters regardless of triggering conditions.}
Hence, the information flow graph $\mathcal{G}'^{(k)}$ is $B$-connected, i.e., $\mathcal{G}'^{( k : k+B-1 )} = {\left( \mathcal{M}, \cup_{s=0}^{B-1}{\mathcal{E}'^{( k+s )}} \right)}$ is connected, for any $k \geq 0$. It is important to note that we use $B$ only for convergence analysis, and it can have any arbitrarily large integer value. Therefore, we are not making strict connectivity assumptions on the underlying graph.

\vspace{-.1mm}
\section{Remarks on Hyperparameters}
\label{ssec:remark}
We make a few remarks on the hyperparameters used in Alg.~\ref{alg:eventBased}. Remark \ref{remark:normalization} elaborates on the choice of $q$ when calculating ${\left\| \mathbf{w}_i^{(k)} - \mathbf{\widehat{w}}_i^{(k)} \right\|}_q$, Remark~\ref{remark:rGuideline} discusses the choice of $r$ in the threshold $r \rho_i \gamma^{(k)}$, and Remark \ref{remark:ratesRelation} elaborates on the threshold decay rate $\gamma^{(k)}$ and the learning rate $\alpha^{(k)}$. More explanations of these remarks are deferred to Appendix~\ref{ssec:explain}.

\vspace{-2mm}
\begin{remark} \label{remark:normalization}
Factor ${\left( \frac1n \right)}^\frac1q$ in the event-triggering condition is a normalization factor, making the conditions independent of the model dimension $n$ and the norm $q \geq 1$ used.
\end{remark}

\vspace{-3mm}
\begin{remark} \label{remark:rGuideline}
The constant $r$ in the threshold of event-triggering condition is a hyperparameter to set the threshold $r \rho_i \gamma^{(k)}$ to a value comparable with ${\left( \frac1n \right)}^\frac1q {\left\| \mathbf{w}_i^{(k)} - \mathbf{\widehat{w}}_i^{(k)} \right\|}_q$. The value of this constant can be chosen as follows:
\vspace{-1mm}
	\begin{equation} \label{eqn:rGuideline}
		\begin{aligned}
			r  = \frac{\alpha^{(0)}}{\gamma^{(0)}} \frac1\rho K L_\infty
			 = \frac1\rho K L_\infty \quad \text{if} \quad \alpha^{(0)} = \gamma^{(0)},
		\end{aligned}
		\vspace{-1mm}
	\end{equation}
in which $K$ is an approximation on the number of local iterations between aggregation events, $\frac1\rho$ is an approximation of $\frac1m \sum_{i\in\mathcal{M}}{\frac1{\rho_i}}$, and $L_\infty$ is an upper bound obtained via the relation ${\left\| \nabla F_i{ \left( x \right)} \right\|}_\infty \le L_\infty$ for all $i$ from Assumption \ref{assump:boundedGradients}.
\end{remark}

\begin{remark} \label{remark:ratesRelation}
To ensure sporadic aggregations at each device in {\tt EF-HC}, the learning rate $\alpha^{(k)}$ and threshold decay rate $\gamma^{(k)}$ should satisfy the following conditions:
	\begin{enumerate}[label=(\alph*)]
		\item $\lim_{k \to \infty}{\frac{\gamma^{(k)}}{\alpha^{(k)}}} = \Omega$, where $ \Omega $ is a finite positive constant. \label{remark:ratesRelation:constant}
		
		\item $\lim_{k \to \infty}{\frac{\nicefrac{\gamma^{(k)}}{\gamma^{(0)}}}{\nicefrac{\alpha^{(k)}}{\alpha^{(0)}}}} = 1$, i.e., $\Omega = \frac{\gamma^{(0)}}{\alpha^{(0)}}$. \label{remark:ratesRelation:1}
	\end{enumerate}
\end{remark}

The conditions in Remark \ref{remark:ratesRelation} ensure that aggregation events (Event 3) neither cease completely nor occur continuously after a while, but instead are executed only when our proposed triggering condition is met (see  Appendix~\ref{ssec:explain} for details). To satisfy these conditions, first the learning rate $\alpha^{(k)}$ should be chosen to meet the criteria of Assumption \ref{assump:stepsizes}, and then $\gamma^{(k)}$ should be chosen to satisfy the above conditions. One choice that satisfies these conditions is $\gamma^{(k)} = \alpha^{(k)}$.

\section{Numerical Results}
\label{sec:simulation}
We now conduct numerical experiments to validate our methodology. We explain our simulation setup in Sec.~\ref{setup} and provide the results and discussion in Sec.~\ref{discussion}.

\subsection{Simulation Setup}\label{setup}
We evaluate our proposed methodology classification tasks on the Fashion-MNIST image recognition dataset \cite{xiao2017fashion}. We employ support vector machine (SVM) and the LeNet5 neural network model as classifiers; SVM satisfies Assumption~\ref{assump:convexity} while LeNet5 (and deep learning models in general) does not.

We consider a network of $m=10$ devices, where the topology
is generated according to a random geometric graph with connectivity $0.4$~\cite{hosseinalipour2022multi}. To generate non-i.i.d. data distributions across devices, each device only contains samples of Fashion-MNIST from a fraction of the $10$ labels. For SVM and LeNet5, we consider 1 and 2 labels/device, respectively.

We set the average link bandwidth to $5000$.
We introduce a resource heterogeneity measure $H$, $0 \le H < 1$, which we use to generate networks with two types of devices: (i) ``weak," which have outgoing links with an average bandwidth of $1000$, and (ii) ``powerful," which have an average outgoing link bandwidth of $\frac{5000 - 1000H}{1-H}$. We set $H=0.8$ for LeNet5 and $H=0.4$ for SVM.

In each experiment, the learning rate is selected as $\alpha^{(k)} = \frac1{\sqrt{1+k}}$, and threshold decay rate is set to $\gamma^{(k)} = \alpha^{(k)}$, satisfying the conditions in Remark \ref{remark:ratesRelation}. The $2$-norm is used for the event-triggering conditions (see Remark \ref{remark:normalization}), and $r=5000 \times 10^{-2}$ following the guidelines of Remark \ref{remark:rGuideline}.

At iteration $k$, we define a resource utilization score as $\frac1m \sum_{i=1}^m{\frac{\sum_{j=1}^m{v_{ij}^{(k)}}}{d_i^{(k)}} \rho_i n}$. The term $\frac{\sum_{j=1}^m{v_{ij}^{(k)}}}{d_i^{(k)}}$ is the outgoing link utilization, and therefore this score is a weighted average of link utilization, penalizing devices with larger $\rho_i$. For our proposed method where $\rho_i = \frac1{b_i}$, this score is the same as the average transmission time, i.e., $\frac1m \sum_{i=1}^m{\frac{\sum_{j=1}^m{v_{ij}^{(k)}}}{d_i^{(k)}} \frac{n}{b_i}}$.

\subsection{Results and Discussion}\label{discussion}
We compare the performance of our method {\tt EF-HC} against three baseline methods: (i) distributed learning with aggregations at every iteration, i.e., zero thresholds (denoted by \textit{ZT}), (ii) decentralized event-triggered FL with the same global threshold across all devices (denoted by \textit{GT}), and (iii) randomized gossip algorithm where each device engages in communication with probability of $\frac1m$ at each iteration~\cite{pu2021distributed} (denoted by \textit{RG}). The performance of our method against these baselines for each classifier is depicted in Fig. \ref{fig:sim}.

Figs. \ref{fig:sim:svm}-(i) and \ref{fig:sim:lenet5}-(i) show the average transmission time units each algorithm requires per training iteration. As can be observed, {\tt EF-HC} results in less transmission delay compared to \textit{ZT} and \textit{GT}, 
which helps to resolve the impact of stragglers by not requiring the same amount of communications and aggregations from devices with less available bandwidth.
Note that although less transmission delay per iteration is desirable for a decentralized FL algorithm, this runs the risk of degrading the performance of the classification task in terms of accuracy when the data distribution across devices is non-i.i.d. Thus, a good comparison between multiple decentralized algorithms is to consider the accuracy reached per transmission time units. In this regard, although \textit{RG} achieves less transmission delay per iteration compared to our method, Figs. \ref{fig:sim:svm}-(iii) and \ref{fig:sim:lenet5}-(iii) reveal that it achieves substantially lower model performance, indicating that our method strikes an effective balance between these objectives.

The average accuracy of devices per iteration is plotted in Figs. \ref{fig:sim:svm}-(ii) and \ref{fig:sim:lenet5}-(ii). These plots are indicative of processing efficiency since they evaluate the accuracy of algorithms per number of gradient descent computations. As expected, the baseline algorithm \textit{ZT} is able to achieve the highest accuracy per iteration since it does not take resource efficiency into account, and thus sacrifices network resources to reach a better accuracy. In these plots, we show that unlike \textit{RG}, the performance of our proposed method {\tt EF-HC} as well as \textit{GT} do not considerably degrade although they use less communication resources, as will be discussed next.

Figs. \ref{fig:sim:svm}-(iii) and \ref{fig:sim:lenet5}-(iii) are perhaps the most critical results, as they assess the accuracy vs. communication time tradeoff. We see that our algorithm {\tt EF-HC} can achieve a higher accuracy while using less transmission time compared to all the baselines, both for the SVM classifier and LeNet5, i.e., with and without the model convexity assumption from our convergence analysis. These plots reveal that our method can adapt to non-i.i.d data distributions across the devices, which is an important characteristic for FL algorithms~\cite{kairouz2021advances}, and achieve a better accuracy as compared to the baselines given a fixed transmission time, i.e., under a fixed network resource consumption.

Finally, we evaluate the effect of network connectivity on our method and baseline methods in Figs. \ref{fig:sim:svm}-(iv) and \ref{fig:sim:lenet5}-(iv) \footnote{For the LeNet5 classifier, we change the simulation setup and set $r = 5000\times10^{-3}$, and let the devices to have samples from only $1$ labels/device.}. Since the graphs are generated randomly in our simulations, we have taken the average performance of all four algorithms over $5$ Monte Carlo instances to reduce the effect of random initialization on the results. It can be observed that higher network connectivity improves the convergence speed of our method and most of the baselines, as expected. Importantly, however, we see that our method has the highest improvement per increase in connectivity.

\section{Conclusion and Future Work}
In this paper, we developed a novel methodology for event-triggered FL with heterogeneous communication thresholds ({\tt EF-HC}). {\tt EF-HC} introduces a scenario where the conventional centralized model aggregations in FL are carried out in a decentralized manner via P2P communications among the devices. To further alleviate the burden of a centralized scheduler and take into account resources heterogeneity across the devices, it considers event-triggered communications with heterogeneous communication thresholds. We conducted a theoretical analysis of {\tt EF-HC} and demonstrated that model training under {\tt EF-HC} asymptotically achieves the global optimal model for standard assumptions in distributed learning. Future work can focus on deriving optimal/data-driven algorithms for setting the event-triggering communication conditions under different network settings.

\bibliographystyle{ieeetr}
\bibliography{root}

\begin{thebibliography}{10}

\bibitem{kairouz2021advances}
P.~Kairouz, H.~B. McMahan, B.~Avent, A.~Bellet, M.~Bennis, A.~N. Bhagoji,
  K.~Bonawitz, Z.~Charles, G.~Cormode, R.~Cummings, {\em et~al.}, ``Advances
  and open problems in federated learning,'' {\em Found.
  Trends{\textregistered} ML}, vol.~14, no.~1--2, pp.~1--210, 2021.

\bibitem{bonawitz2019towards}
K.~Bonawitz, H.~Eichner, W.~Grieskamp, D.~Huba, A.~Ingerman, V.~Ivanov,
  C.~Kiddon, J.~Kone{\v{c}}n{\`y}, S.~Mazzocchi, B.~McMahan, {\em et~al.},
  ``Towards federated learning at scale: System design,'' {\em Proc. Machine
  Learn. Sys.}, vol.~1, pp.~374--388, 2019.

\bibitem{li2020federated}
T.~Li, A.~K. Sahu, A.~Talwalkar, and V.~Smith, ``Federated learning:
  Challenges, methods, and future directions,'' {\em IEEE Signal Proc. Mag.},
  vol.~37, no.~3, pp.~50--60, 2020.

\bibitem{chiang2016fog}
M.~Chiang and T.~Zhang, ``Fog and {IoT}: An overview of research
  opportunities,'' {\em IEEE IoT J.}, vol.~3, no.~6, pp.~854--864, 2016.

\bibitem{hosseinalipour2020federated}
S.~Hosseinalipour, C.~G. Brinton, V.~Aggarwal, H.~Dai, and M.~Chiang, ``From
  federated to fog learning: Distributed machine learning over heterogeneous
  wireless networks,'' {\em IEEE Commun. Mag.}, vol.~58, no.~12, pp.~41--47,
  2020.

\bibitem{tsitsiklis1986distributed}
J.~Tsitsiklis, D.~Bertsekas, and M.~Athans, ``Distributed asynchronous
  deterministic and stochastic gradient optimization algorithms,'' {\em IEEE
  Trans. Auto. Control}, vol.~31, no.~9, pp.~803--812, 1986.

\bibitem{nedic2009distributed}
A.~Nedic and A.~Ozdaglar, ``Distributed subgradient methods for multi-agent
  optimization,'' {\em IEEE Trans. Auto. Control}, vol.~54, no.~1, pp.~48--61,
  2009.

\bibitem{nedic2010constrained}
A.~Nedic, A.~Ozdaglar, and P.~A. Parrilo, ``Constrained consensus and
  optimization in multi-agent networks,'' {\em IEEE Trans. Auto. Control},
  vol.~55, no.~4, pp.~922--938, 2010.

\bibitem{pu2021distributed}
S.~Pu and A.~Nedi{\'c}, ``Distributed stochastic gradient tracking methods,''
  {\em Math. Program.}, vol.~187, no.~1, pp.~409--457, 2021.

\bibitem{nedic2014distributed}
A.~Nedi{\'c} and A.~Olshevsky, ``Distributed optimization over time-varying
  directed graphs,'' {\em IEEE Trans. Auto. Control}, vol.~60, no.~3,
  pp.~601--615, 2014.

\bibitem{xin2018linear}
R.~Xin and U.~A. Khan, ``A linear algorithm for optimization over directed
  graphs with geometric convergence,'' {\em IEEE Control Sys. Lett.}, vol.~2,
  no.~3, pp.~315--320, 2018.

\bibitem{boyd2004fastest}
S.~Boyd, P.~Diaconis, and L.~Xiao, ``Fastest mixing markov chain on a graph,''
  {\em SIAM review}, vol.~46, no.~4, pp.~667--689, 2004.

\bibitem{xiao2004fast}
L.~Xiao and S.~Boyd, ``Fast linear iterations for distributed averaging,'' {\em
  Sys. \& Control Lett.}, vol.~53, no.~1, pp.~65--78, 2004.

\bibitem{nishio2019client}
T.~Nishio and R.~Yonetani, ``Client selection for federated learning with
  heterogeneous resources in mobile edge,'' in {\em IEEE Int. Conf. Commun.
  (ICC)}, pp.~1--7, 2019.

\bibitem{nguyen2020fast}
H.~T. Nguyen, V.~Sehwag, S.~Hosseinalipour, C.~G. Brinton, M.~Chiang, and H.~V.
  Poor, ``Fast-convergent federated learning,'' {\em IEEE J. Sel. Areas
  Commun.}, vol.~39, no.~1, pp.~201--218, 2020.

\bibitem{diao2020heterofl}
E.~Diao, J.~Ding, and V.~Tarokh, ``Heterofl: Computation and communication
  efficient federated learning for heterogeneous clients,'' in {\em Int. Conf.
  Learn. Represent. (ICLR)}, 2020.

\bibitem{wang2019adaptive}
S.~Wang, T.~Tuor, T.~Salonidis, K.~K. Leung, C.~Makaya, T.~He, and K.~Chan,
  ``Adaptive federated learning in resource constrained edge computing
  systems,'' {\em IEEE J. Sel. Areas in Commun.}, vol.~37, no.~6,
  pp.~1205--1221, 2019.

\bibitem{gu2021fast}
X.~Gu, K.~Huang, J.~Zhang, and L.~Huang, ``Fast federated learning in the
  presence of arbitrary device unavailability,'' {\em Advances Neur. Info.
  Process. Sys. (NeurIPS)}, vol.~34, 2021.

\bibitem{savazzi2020federated}
S.~Savazzi, M.~Nicoli, and V.~Rampa, ``Federated learning with cooperating
  devices: A consensus approach for massive {IoT} networks,'' {\em IEEE
  Internet Things J.}, vol.~7, no.~5, pp.~4641--4654, 2020.

\bibitem{lalitha2019peer}
A.~Lalitha, O.~C. Kilinc, T.~Javidi, and F.~Koushanfar, ``Peer-to-peer
  federated learning on graphs,'' {\em arXiv preprint arXiv:1901.11173}, 2019.

\bibitem{hosseinalipour2022multi}
S.~Hosseinalipour, S.~S. Azam, C.~G. Brinton, N.~Michelusi, V.~Aggarwal, D.~J.
  Love, and H.~Dai, ``Multi-stage hybrid federated learning over large-scale
  {D2D}-enabled fog networks,'' {\em IEEE/ACM Trans. Network.}, 2022.

\bibitem{lin2021semi}
F.~P.-C. Lin, S.~Hosseinalipour, S.~S. Azam, C.~G. Brinton, and N.~Michelusi,
  ``Semi-decentralized federated learning with cooperative {D2D} local model
  aggregations,'' {\em IEEE J. Sel. Areas Commun.}, vol.~39, no.~12,
  pp.~3851--3869, 2021.

\bibitem{george2020distributed}
J.~George and P.~Gurram, ``Distributed stochastic gradient descent with
  event-triggered communication,'' in {\em Proc. AAAI Conf. Artif. Intell.},
  vol.~34, pp.~7169--7178, 2020.

\bibitem{xiao2017fashion}
H.~Xiao, K.~Rasul, and R.~Vollgraf, ``Fashion-mnist: a novel image dataset for
  benchmarking machine learning algorithms,'' {\em arXiv preprint
  arXiv:1708.07747}, 2017.

\end{thebibliography}

\vspace{-1mm}
\appendix

\vspace{-2mm}
\subsection{Proof of Theorem 1}
\label{ssec:proof}
Rewriting the event-based updates of Algorithm \ref{alg:eventBased}, we get
\vspace{-1mm}
\begin{equation} \label{eqn:iterBased}
	\begin{gathered}
		\begin{aligned}
			\mathbf{w}_i^{(k+1)} = \mathbf{w}_i^{(k)} & + \sum_{j \in \mathcal{N}'^{(k)}_i}{\beta_{ij}^{(k)} \left( \mathbf{w}_j^{(k)} - \mathbf{w}_i^{(k)} \right) v_{ij}^{(k)}}
			\\
			& - \alpha^{(k)} \mathbf{g}_i{\left( \mathbf{w}_i^{(k)} \right)},
		\end{aligned}
		\\
		\mathbf{\widehat{w}}_i^{(k+1)} = \mathbf{\widehat{w}}_i^{(k)} \left( 1 - v_i^{(k)} \right) + \mathbf{w}_i^{(k)} v_i^{(k)}.
	\end{gathered}
	\vspace{-1mm}
\end{equation}

Rearranging the relations in \eqref{eqn:iterBased}, we have
\begin{equation} \label{eqn:recursiveRelation}
	\begin{aligned}
		& \begin{aligned}
			\mathbf{w}_i^{(k+1)} = & \left( 1 - \sum_{j=1}^m{\beta_{ij}^{(k)} v_{ij}^{(k)}} \right) \mathbf{w}_i^{(k)}
			\\
			& + \sum_{j=1}^m{\beta_{ij}^{(k)} v_{ij}^{(k)} \mathbf{w}_j^{(k)}} - \alpha^{(k)} \mathbf{g}_i{\left( \mathbf{w}_i^{(k)} \right)}
		\end{aligned}
		\\
		& = \sum_{j=1}^m{p_{ij}^{(k)} \mathbf{w}_j^{(k)}} - \alpha^{(k)} \mathbf{g}_i{\left( \mathbf{w}_i^{(k)} \right)}.
	\end{aligned}
\end{equation}

Next, we collect the vectors of all devices that were previously introduced into matrix form as follows:
{\small
$
		\mathbf{W}^{(k)} =
		\begin{bmatrix}
			\mathbf{w}_1^{(k)}
			& \cdots &
			\mathbf{w}_m^{(k)}
		\end{bmatrix}^\top$, $
		\mathbf{\widehat{W}}^{(k)} =
		\begin{bmatrix}
			\mathbf{\widehat{w}}_1^{(k)}
			& \cdots &
			\mathbf{\widehat{w}}_m^{(k)}
		\end{bmatrix}^\top$, 
		$
		\mathbf{G}^{(k)} =
		\begin{bmatrix}
			\mathbf{g}_1{\left( \mathbf{w}_1^{(k)} \right)}
			& \cdots &
			\mathbf{g}_m{\left( \mathbf{w}_m^{(k)}  \right)}
		\end{bmatrix}^\top$,}
		$
		\mathbf{P}^{(k)} = [p_{ij}^{(k)}]_{1\leq i,j\leq m}$.

Now, we transform the recursive update rules of \eqref{eqn:recursiveRelation} into matrix form to get the following relationship:
\begin{equation} \label{eqn:recursiveRelationMatrix}
	\mathbf{W}^{(k+1)} = \mathbf{P}^{(k)} \mathbf{W}^{(k)} - \alpha^{(k)} \mathbf{G}^{(k)}.
\end{equation}

The recursive expression in \eqref{eqn:recursiveRelationMatrix} has been investigated before \cite{nedic2009distributed}. In the following, we build upon some lemmas from prior work given our assumptions in Sec. \ref{sec:convergence:sub:assumptions} to obtain the final result of the theorem.

Starting from iteration $s$, where $s \le k$, we have
\vspace{-1mm}
\begin{equation} \label{eqn:explicitRelationMatrix}
	\begin{gathered}
	    \begin{aligned}
    	    \mathbf{W}^{(k+1)} = \mathbf{P}^{( k:s )} \mathbf{W}^{(s)} & - \sum_{r=s+1}^k{\alpha^{(r-1)} \mathbf{P}^{( k:r )} \mathbf{G}^{(r-1)}}
    	    \\
    	    & - \alpha^{(k)} \mathbf{G}^{(k)},
	    \end{aligned}
	    \\
	    \mathbf{P}^{( k:s )} = \mathbf{P}^{(k)} \mathbf{P}^{(k-1)} \cdots \mathbf{P}^{(s+1)} \mathbf{P}^{(s)}.
	\end{gathered}
	\vspace{-1mm}
\end{equation}

If we let $s = 0$ in \eqref{eqn:explicitRelationMatrix}, we get an explicit relationship for the model parameters at iteration $k$ with respect to their initial values. Focusing on the parameters of each device $i$ (row $i$ of $\mathbf{W}^{(k+1)}$), we get
\vspace{-1mm}
\begin{equation} \label{eqn:wFromZero}
  \hspace{-2mm}
    \begin{aligned}
	    &\mathbf{w}_i^{(k+1)} =  \sum_{j=1}^m{p_{ij}^{( k:0 )} \mathbf{w}_j^{(0)}}
	    \\
	    & \;\;\; - \sum_{r=1}^k{\alpha^{(r-1)} \sum_{j=1}^m{p_{ij}^{( k:r )} \mathbf{g}_j{\left( \mathbf{w}_j^{(r-1)} \right)}}} - \alpha^{(k)} \mathbf{g}_i{\left( \mathbf{w}_i^{(k)} \right)}.
    \end{aligned}
    \hspace{-2mm}
    \vspace{-1mm}
\end{equation}

To analyze the local model consensus, we define the average model $\mathbf{\bar{w}}^{(k)}$ as
\vspace{-1mm}
\begin{equation*}
    \mathbf{\bar{w}}^{(k)} = \frac1m \sum_{i=1}^m{\mathbf{w}_i^{(k)}} = \frac1m \mathbf{1}_m^\top \mathbf{W}^{(k)}.
    \vspace{-1mm}
\end{equation*}

The recursive relation for $\mathbf{\bar{w}^{(k)}}$ using \eqref{eqn:recursiveRelationMatrix} and the stochasticity of $\mathbf{P}^{(k)}$ is
\vspace{-1mm}
\begin{equation*}
    \mathbf{\bar{w}}^{(k+1)} = \frac1m \mathbf{1}_m^\top \mathbf{W}^{(k+1)} = \mathbf{\bar{w}}^{(k)} - \frac{\alpha^{(k)}}m \mathbf{1}_m^\top \mathbf{G}^{(k)}.
    \vspace{-1mm}
\end{equation*}

Also, the explicit relationship connecting $\mathbf{\bar{w}}^{(k+1)}$ to its corresponding value at iteration $0$ can be calculated using \eqref{eqn:wFromZero} together with the stochasticity of $\mathbf{P}^{( k:0 )}$:
\vspace{-1mm}
\begin{equation} \label{eqn:wbarFromZero}
    \mathbf{\bar{w}}^{(k+1)} = \mathbf{\bar{w}}^{(0)} - \frac1m \sum_{r=1}^{k+1}{\alpha^{(r-1)} \sum_{j=1}^m{\mathbf{g}_j{\left( \mathbf{w}_j^{(r-1)} \right)}}}.
    \vspace{-1mm}
\end{equation}

Part \ref{lemma:consensus:0} of the following lemma shows that model parameters $\mathbf{w}_i^{(k)}$ of each device $i$ asymptotically converge to $\mathbf{\bar{w}}^{(k)}$, thus reaching consensus as $k \to \infty$.

\begin{lemma} [Follows from Lemma 8 of \cite{nedic2010constrained}] \label{lemma:consensus}
	Let the sequence $\lbrace \mathbf{w}_i^{(k)} \rbrace$ be generated by iteration \eqref{eqn:wFromZero} and the sequence $\left \lbrace \mathbf{\bar{w}}^{(k)} \right \rbrace$ be generated by \eqref{eqn:wbarFromZero}. Then $\forall i \in \mathcal{M}$ we have
	\begin{enumerate}[label=(\alph*)]
	    \item $\lim_{k \to \infty}{{\left\| \mathbf{w}_i^{(k)} - \mathbf{\bar{w}}^{(k)} \right\|}_2} = 0$, if the step size satisfies $\lim_{k \to \infty}{\alpha^{(k)}} = 0$, and \label{lemma:consensus:0}
	    
	    \item $\sum_{k=1}^\infty{\alpha^{(k)} {\left\| \mathbf{w}_i^{(k)} - \mathbf{\bar{w}}^{(k)} \right\|}_2} < \infty$, if the step size satisfies $\sum_{k=0}^\infty{{\left( \alpha^{(k)} \right)}^2} < \infty$. \label{lemma:consensus:const}
	\end{enumerate}
\end{lemma}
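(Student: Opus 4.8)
The plan is to subtract the explicit averaged recursion \eqref{eqn:wbarFromZero} from the per-device explicit recursion \eqref{eqn:wFromZero}, so that the consensus error $\mathbf{w}_i^{(k+1)} - \mathbf{\bar{w}}^{(k+1)}$ is written entirely in terms of the \emph{deviations} of the transition-matrix products from their stationary value $\frac1m$, namely $p_{ij}^{(k:r)} - \frac1m$. Using $\mathbf{\bar{w}}^{(0)} = \sum_j \frac1m \mathbf{w}_j^{(0)}$ and peeling off the $r=k+1$ term of the sum in \eqref{eqn:wbarFromZero}, this difference splits into three pieces: an initial-condition term $\sum_j ( p_{ij}^{(k:0)} - \frac1m ) \mathbf{w}_j^{(0)}$, an accumulated-gradient term $-\sum_{r=1}^k \alpha^{(r-1)} \sum_j ( p_{ij}^{(k:r)} - \frac1m ) \mathbf{g}_j( \mathbf{w}_j^{(r-1)} )$, and a residual current-gradient term $-\alpha^{(k)} ( \mathbf{g}_i( \mathbf{w}_i^{(k)} ) - \frac1m \sum_j \mathbf{g}_j( \mathbf{w}_j^{(k)} ) )$.

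The key ingredient, which I would import as the technical core of Lemma 8 of \cite{nedic2010constrained} (see also \cite{nedic2009distributed}), is the geometric mixing of the products $\mathbf{P}^{(k:r)}$: under Assumption \ref{assump:weights} (doubly-stochastic, $\eta$-bounded, symmetric weights) together with the $B$-connectivity of the information flow graph guaranteed by Assumption \ref{assump:connectivity}, there exist constants $C > 0$ and $\lambda \in (0,1)$ with $| p_{ij}^{(k:r)} - \frac1m | \le C \lambda^{k-r}$ for all $i,j$ and all $r \le k$. The event-triggered nature of the scheme enters only through the indicators $v_{ij}^{(k)}$ embedded in $p_{ij}^{(k)}$; what makes the cited bound applicable here is precisely that Assumption \ref{assump:connectivity}\ref{assump:conn:boundedintercom} forces a broadcast at each device within every window of length $B$, so the effective information-flow graph union over each such window is connected and the standard product-of-stochastic-matrices argument goes through with window size $B$.

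With this bound in hand, taking $\|\cdot\|_2$ of the three-term expansion and applying the triangle inequality together with the gradient bound $\| \mathbf{g}_j( \cdot ) \|_2 \le L$ from Assumption \ref{assump:boundedGradients} yields
\[
\| \mathbf{w}_i^{(k+1)} - \mathbf{\bar{w}}^{(k+1)} \|_2 \;\le\; mC\lambda^k \max_j \| \mathbf{w}_j^{(0)} \|_2 \;+\; mCL \sum_{r=1}^k \alpha^{(r-1)} \lambda^{k-r} \;+\; 2L\,\alpha^{(k)}.
\]
For part \ref{lemma:consensus:0} I would argue each term vanishes as $k \to \infty$: the first is geometric, the third vanishes since $\alpha^{(k)} \to 0$ (Assumption \ref{assump:stepsizes}), and the middle term vanishes by the standard fact that the convolution of a summable geometric kernel with a sequence tending to zero tends to zero. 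For part \ref{lemma:consensus:const} I would multiply the bound by $\alpha^{(k)}$ and sum over $k$: the $\alpha^{(k)}\lambda^k$ and $(\alpha^{(k)})^2$ contributions are immediately summable, and for the double sum $\sum_k \alpha^{(k)} \sum_r \alpha^{(r-1)} \lambda^{k-r}$ I would apply $\alpha^{(k)}\alpha^{(r-1)} \le \tfrac12 ( (\alpha^{(k)})^2 + (\alpha^{(r-1)})^2 )$, interchange the order of summation, and bound the remaining geometric weights, reducing everything to $\sum_k (\alpha^{(k)})^2 < \infty$ from Assumption \ref{assump:stepsizes}.

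I expect the main obstacle to be establishing (or correctly invoking) the uniform geometric mixing rate $| p_{ij}^{(k:r)} - \frac1m | \le C\lambda^{k-r}$ in the \emph{event-triggered, time-varying} setting, since the transition matrices are now data-dependent through the $v_{ij}^{(k)}$ and are only jointly connected over windows of length $B$ rather than at every iteration; the remainder is the standard diminishing-step-size consensus bookkeeping.
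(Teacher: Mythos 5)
Your proposal is correct and is essentially the same route the paper takes: the paper establishes the matrix recursion $\mathbf{W}^{(k+1)} = \mathbf{P}^{(k)}\mathbf{W}^{(k)} - \alpha^{(k)}\mathbf{G}^{(k)}$, notes that Assumptions~\ref{assump:weights} and~\ref{assump:connectivity} make the information-flow graph $B$-connected with doubly-stochastic, $\eta$-bounded weights, and then defers entirely to Lemma~8 of \cite{nedic2010constrained}, whose underlying argument is exactly the three-term decomposition and geometric-mixing bound $\lvert p_{ij}^{(k:r)} - \frac1m\rvert \le C\lambda^{k-r}$ that you reconstruct. You also correctly identify the only point the paper must (and does) argue beyond the citation, namely that Assumption~\ref{assump:connectivity}\ref{assump:conn:boundedintercom} converts the data-dependent, event-triggered indicators $v_{ij}^{(k)}$ into the uniform $B$-connectivity required for that mixing bound.
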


We next move on to show that $\mathbf{\bar{w}}^{(k)}$ under our method asymptotically converges to the optimizer of the global loss. First, we provide the following lemma, which reveals the relationship between  $F{\left( . \right)}$ evaluated at $\mathbf{\bar{w}}^{(k)}$ and $\mathbf{w}_i^{(k)}$.

\begin{lemma}[Follows from Lemma 6 in \cite{nedic2010constrained}] \label{lemma:objective_model_relation}
    Let the sequence $\lbrace \mathbf{w}_i^{(k)} \rbrace$ be generated by iteration \eqref{eqn:wFromZero} $\forall i \in \mathcal{M}$ and the sequence $\lbrace \mathbf{\bar{w}}^{(k)} \rbrace$ be generated by iteration \eqref{eqn:wbarFromZero}. If Assumptions \ref{assump:convexity} and \ref{assump:boundedGradients} hold, we have
    \begin{equation*}
        \begin{aligned}
            \frac{2 \alpha^{(k)}}m & \left( F{\left( \mathbf{\bar{w}}^{(k)} \right)} - F{\left( \mathbf{w}_i^{(k)} \right)} \right)
            \\
            & \le {\left\| \mathbf{\bar{w}}^{(k)} - \mathbf{w}^{(k)} \right\|}_2^2 - {\left\| \mathbf{\bar{w}}^{(k+1)} - \mathbf{w}_i^{(k)} \right\|}_2^2
            \\
            & + \frac{L^2}m {\left( \alpha^{(k)} \right)}^2 + \frac{4L}m \alpha^{(k)} \sum_{j=1}^m{{\left\| \mathbf{\bar{w}}^{(k)} - \mathbf{w}_j^{(k)} \right\|}_2}.
        \end{aligned}
    \end{equation*}
\end{lemma}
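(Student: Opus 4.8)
The plan is to derive the inequality directly from the averaged dynamics, in the spirit of Lemma 6 of \cite{nedic2010constrained}, using convexity (Assumption \ref{assump:convexity}) to trade gradient inner products for objective gaps and bounded gradients (Assumption \ref{assump:boundedGradients}) to absorb the leftover terms. First I would record the one-step recursion for the average iterate, which follows from \eqref{eqn:recursiveRelationMatrix} and the double stochasticity of $\mathbf{P}^{(k)}$, namely $\mathbf{\bar{w}}^{(k+1)} = \mathbf{\bar{w}}^{(k)} - \frac{\alpha^{(k)}}{m} \sum_{j=1}^m \mathbf{g}_j(\mathbf{w}_j^{(k)})$, and expand the squared distance to the reference point $\mathbf{w}_i^{(k)}$:
\[
  \left\| \mathbf{\bar{w}}^{(k+1)} - \mathbf{w}_i^{(k)} \right\|_2^2 = \left\| \mathbf{\bar{w}}^{(k)} - \mathbf{w}_i^{(k)} \right\|_2^2 - \frac{2\alpha^{(k)}}{m} \sum_{j=1}^m \mathbf{g}_j(\mathbf{w}_j^{(k)})^\top \left( \mathbf{\bar{w}}^{(k)} - \mathbf{w}_i^{(k)} \right) + \frac{(\alpha^{(k)})^2}{m^2} \left\| \sum_{j=1}^m \mathbf{g}_j(\mathbf{w}_j^{(k)}) \right\|_2^2.
\]
Rearranging isolates the cross term; transferring it to the left and matching the paper's normalization $F = \frac1m \sum_i F_i$ will produce the objective gap, while the quadratic term becomes the $(\alpha^{(k)})^2$ contribution once bounded.

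The core of the argument is to lower bound the cross term $\sum_{j=1}^m \mathbf{g}_j(\mathbf{w}_j^{(k)})^\top (\mathbf{\bar{w}}^{(k)} - \mathbf{w}_i^{(k)})$ by $m \left( F(\mathbf{\bar{w}}^{(k)}) - F(\mathbf{w}_i^{(k)}) \right)$ up to consensus-error corrections. I would split the displacement through the local iterate as $\mathbf{\bar{w}}^{(k)} - \mathbf{w}_i^{(k)} = (\mathbf{\bar{w}}^{(k)} - \mathbf{w}_j^{(k)}) + (\mathbf{w}_j^{(k)} - \mathbf{w}_i^{(k)})$. Applying convexity of $F_j$ anchored at $\mathbf{w}_j^{(k)}$ to the second piece gives $\mathbf{g}_j(\mathbf{w}_j^{(k)})^\top (\mathbf{w}_j^{(k)} - \mathbf{w}_i^{(k)}) \ge F_j(\mathbf{w}_j^{(k)}) - F_j(\mathbf{w}_i^{(k)})$, and summing produces the $F(\mathbf{w}_i^{(k)})$ term after using $\frac1m \sum_j F_j(\mathbf{w}_i^{(k)}) = F(\mathbf{w}_i^{(k)})$; a second application of convexity anchored at $\mathbf{\bar{w}}^{(k)}$ converts $\sum_j F_j(\mathbf{w}_j^{(k)})$ into $mF(\mathbf{\bar{w}}^{(k)})$ minus a consensus-error term. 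The first displacement piece and the error from the second step are each bounded via Cauchy-Schwarz together with $\|\nabla F_j\|_2 \le L$ (Assumption \ref{assump:boundedGradients}), producing the $\frac{4L}{m} \alpha^{(k)} \sum_{j=1}^m \|\mathbf{\bar{w}}^{(k)} - \mathbf{w}_j^{(k)}\|_2$ term. The quadratic term is then controlled by $\left\| \frac1m \sum_j \mathbf{g}_j(\mathbf{w}_j^{(k)}) \right\|_2 \le L$, giving the $L^2 (\alpha^{(k)})^2$ contribution.

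The main obstacle is the bookkeeping in the double use of convexity: because $\mathbf{g}_j$ is evaluated at $\mathbf{w}_j^{(k)}$ while the desired objective gap is stated at $\mathbf{\bar{w}}^{(k)}$ and $\mathbf{w}_i^{(k)}$, the intermediate points $\mathbf{w}_j^{(k)}$ must be inserted twice and the two resulting mismatches absorbed into $\|\mathbf{\bar{w}}^{(k)} - \mathbf{w}_j^{(k)}\|_2$ terms, whose later summability against $\alpha^{(k)}$ is exactly what Lemma \ref{lemma:consensus} supplies when this estimate is telescoped in the proof of Theorem \ref{theorem:main}. A secondary subtlety is that convexity is invoked on the stochastic gradient $\mathbf{g}_j(\mathbf{w}_j^{(k)})$ as though it were a (sub)gradient of $F_j$; I would read $\mathbf{g}_j(\mathbf{w}_j^{(k)})$ as the gradient direction driving the update and apply Assumption \ref{assump:convexity} accordingly, mirroring the treatment in \cite{nedic2010constrained}.
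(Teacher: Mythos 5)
The paper never actually proves this lemma---it only cites Lemma~6 of \cite{nedic2010constrained}---so your outline can only be judged against the standard argument there, and on structure you have it right: expand $\| \mathbf{\bar{w}}^{(k+1)} - \mathbf{w}_i^{(k)} \|_2^2$ via the averaged recursion, lower-bound the cross term by inserting $\mathbf{w}_j^{(k)}$ and applying convexity twice (once anchored at $\mathbf{w}_j^{(k)}$, once at $\mathbf{\bar{w}}^{(k)}$), and absorb the mismatches with Cauchy--Schwarz and the gradient bound $L$. That is exactly how the cited lemma is proved.

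Two concrete problems remain. First, your bookkeeping does not reproduce the stated constants: with the paper's normalization $F = \frac{1}{m}\sum_i F_i$, the cross term $-\frac{2\alpha^{(k)}}{m}\sum_j \mathbf{g}_j^\top(\mathbf{\bar{w}}^{(k)}-\mathbf{w}_i^{(k)})$ yields $-2\alpha^{(k)}\bigl(F(\mathbf{\bar{w}}^{(k)})-F(\mathbf{w}_i^{(k)})\bigr)$, not $-\frac{2\alpha^{(k)}}{m}(\cdots)$, and the quadratic term $\frac{(\alpha^{(k)})^2}{m^2}\|\sum_j\mathbf{g}_j\|_2^2$ is bounded by $L^2(\alpha^{(k)})^2$, not $\frac{L^2}{m}(\alpha^{(k)})^2$---indeed you write $L^2(\alpha^{(k)})^2$ yourself. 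So what your argument delivers is not the displayed inequality; the display appears to have been transcribed from a source using the sum normalization $f=\sum_i f_i$, and a proof of the statement as given must either derive those exact constants or flag and correct the normalization. Second, the stochastic-gradient point you defer at the end is a genuine gap, not a matter of ``reading $\mathbf{g}_j$ as the gradient direction'': $\mathbf{g}_j(\mathbf{w}_j^{(k)})$ is a mini-batch gradient, so $\mathbf{g}_j(\mathbf{w}_j^{(k)})^\top(\mathbf{w}_j^{(k)}-\mathbf{w}_i^{(k)})\ge F_j(\mathbf{w}_j^{(k)})-F_j(\mathbf{w}_i^{(k)})$ does not hold pointwise for $F_j$; it holds for the sampled mini-batch loss, and recovering $F_j$ requires taking expectations over the mini-batch selection (with an unbiasedness assumption the paper never states) or replacing $\mathbf{g}_j$ by $\nabla F_j$. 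The cited reference uses exact (sub)gradients, so ``mirroring'' it does not resolve this; as written, your argument establishes the deterministic inequality only for full-batch gradients, or a version in expectation otherwise.
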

Finally, we only need to show that the average of models $\mathbf{\bar{w}}^{(k)}$ asymptotically optimizes the global loss. To prove this, we take the summation of the relation in Lemma \ref{lemma:objective_model_relation} from $k=0$ to $\infty$, and then use the results of Lemma \ref{lemma:consensus}-\ref{lemma:consensus:const} alongside the step size conditions $\lim_{k \to \infty}{\alpha^{(k)}} = 0$ and $\sum_{k=0}^\infty{{\left( \alpha^{(k)} \right)}^2} < \infty$. It follows that $\lim_{k \to \infty}{F{\left( \mathbf{\bar{w}}^{(k)} \right)} - F^\star} = 0$.

\subsection{Further Explanation of Remarks 1-3}
\label{ssec:explain}

\textbf{Remark \ref{remark:normalization}.}
For the $q$-norm of a vector $\mathbf{w} \in \mathbb{R}^n$, we have
\begin{equation} \label{eqn:normBounds}
{\left\| \mathbf{w} \right\|}_u \le {\left\| \mathbf{w} \right\|}_q \le n^{\frac1q - \frac1u} {\left\| \mathbf{w} \right\|}_u,
\end{equation}
where $1 \le q < u$. Also note that based on the way Algorithm \ref{alg:eventBased} defines the event-triggering conditions, the relation $C {\| \mathbf{w}_i^{\left( k \right)} - \mathbf{\widehat{w}}_i^{\left( k \right)} \|}_q < r \rho_i \gamma^{\left( k \right)}$ holds at every iteration, since otherwise an event will be triggered to ensure it. $C$ is a normalization factor to be derived here.

\begin{enumerate}[label=(\alph*)]
	\item Considering all the norms that can be used for a vector, it is only the $\infty$-norm that does not depend on the dimension of the vector. Thus, a model-invariant event-triggering condition would result in the relation ${\| \mathbf{w}_i^{\left( k \right)} - \mathbf{\widehat{w}}_i^{\left( k \right)} \|}_\infty < r \rho_i \gamma^{\left( k \right)}$ holding with $C=1$. \\[-0.05in]
	
	\item To not be constrained by the $\infty$-norm over the choice of $q$ in $C {\| \mathbf{w}_i^{\left( k \right)} - \mathbf{\widehat{w}}_i^{\left( k \right)} \|}_q < r \rho_i \gamma^{\left( k \right)}$, and to still ensure invariance over the model dimension $n$, we can write the following by letting $u \to \infty$ in \eqref{eqn:normBounds}:
	\begin{equation*}
	{\left( \frac1n \right)}^\frac1q {\left\| \mathbf{w}_i^{\left( k \right)} - \mathbf{\widehat{w}}_i^{\left( k \right)} \right\|}_q \le {\left\| \mathbf{w}_i^{\left( k \right)} - \mathbf{\widehat{w}}_i \right\|}_\infty < r \rho_i \gamma^{\left( k \right)}.
	\end{equation*}
\end{enumerate}

\textbf{Remark \ref{remark:rGuideline}. }
Based on \eqref{eqn:explicitRelationMatrix}, we can obtain the following relationship between the state of device $i$ from iteration $s$ to $k \geq s$:
\begin{equation*}
	\begin{aligned}
    	\mathbf{w}_i^{\left( k \right)} = & \sum_{j=1}^m{p_{ij}^{\left( k-1:s \right)} \mathbf{w}_j^{\left( s \right)}} - \alpha^{\left( k-1 \right)} \mathbf{g}_i{\left( \mathbf{w}_i^{(k-1)} \right)}
    	\\
    	& - \sum_{r=s+1}^{k-1}{\alpha^{\left( r-1 \right)} \sum_{j=1}^m{p_{ij}^{\left( k-1:r \right)} \mathbf{g}_j{\left( \mathbf{w}_j^{(r-1)} \right)}}}.
	\end{aligned}
\end{equation*}

Assuming no aggregation events occur at device $i$ or its neighbors from iteration $s$ to iteration $k$, we will have: (i) $\mathbf{\widehat{w}}_i^{\left( k \right)} = \mathbf{w}_i^{\left( s \right)}$; and (ii) $p_{ii}^{\left( k-1:r \right)} = 1$ and $p_{ij}^{\left( k-1:r \right)} = 0$ for all $j \neq i$ and $s \le r \le k-1$. As a result, we get
\begin{equation*}
	\mathbf{w}_i^{\left( k \right)} = \mathbf{w}_i^{\left( s \right)} - \sum_{r=s}^{k-1}{\alpha^{\left( r \right)} \mathbf{g}_i{\left( \mathbf{w}_i^{(r)} \right)}}.
\end{equation*}

In other words, device $i$ solely conducts SGD from iteration $s$ to $k$, and thus
\begin{equation} \label{eqn:ratesRelation}
	\begin{aligned}
    	{\left\| \mathbf{w}_i^{\left( k\right)} - \mathbf{\widehat{w}}_i^{\left( k\right)} \right\|}_q & \le \sum_{r=s}^{k-1}{\alpha^{\left( r \right)} n^{\frac1q} {\left\| \mathbf{g}_i{\left( \mathbf{w}_i^{(r)} \right)} \right\|}_\infty}
    	\\
    	& \le \alpha^{\left( s \right)} n^{\frac1q} \left( k-s \right) L_\infty.
	\end{aligned}
\end{equation}

The expression above gives us a guideline on selecting the hyperparameter $r$. Since $r$ has a constant value throughout the training process, we select it in a way to have our desired behavior from the early iterations, i.e., $s=0$. Considering the extreme case where maximum steps are taken to update $\mathbf{w}_i^{\left( k \right)}$, i.e., steps of size $\alpha^{(0)} {\| \mathbf{g}_i{( \mathbf{w}_i^{(0)} )} \|}_q \approx \alpha^{(0)} n^{\frac1q} L_\infty$, we set $r$ to a value such that it would take approximately $K$ iterations with maximum steps before the event-triggering condition is reached. Note that for the threshold decay rate $\gamma^{\left( k \right)}$, its extreme case value $\gamma^{(0)}$ is considered as well:
\begin{equation} \label{eqn:ratesRelation2}
	\begin{gathered}
    	{\left( \frac1n \right)}^{\frac1q} \alpha^{(0)} n^{\frac1q} K L_\infty = r \rho_i \gamma^{(0)},
    	\\
    	r = \frac{\alpha^{(0)}}{\gamma^{(0)}} \frac1{\rho_i} K L_\infty.
	\end{gathered}
\end{equation}

However, $r$ should be a global variable that has the same value across all devices. Thus, we take the average of the relation above across all devices
\begin{equation*}
	r = \frac{\alpha^{(0)}}{\gamma^{(0)}} \left( \frac1m \sum_{i=1}^m{\frac1{\rho_i}} \right) K L_\infty.
\end{equation*}

Since in our fully-decentralized setting there is no central server with the knowledge of each $\rho_i$, calculating $\frac1m \sum_{i=1}^m{\frac1{\rho_i}}$ exactly is not possible. Thus, we replace that term with an estimate $\frac1\rho$ to get~\eqref{eqn:rGuideline}.

\textbf{Remark \ref{remark:ratesRelation}. } Expression \eqref{eqn:rGuideline} in Remark \ref{remark:rGuideline} was used to derive a value for the constant $r$. We use similar arguments to find a relationship between the learning rate $\alpha^{(k)}$ and the threshold decay rate $\gamma^{(k)}$. Using \eqref{eqn:ratesRelation} and \eqref{eqn:ratesRelation2} and solving for ${\Delta k}_i^u = k_i^{u+1} - k_i^u$, where $k^u_i$ denotes the iteration where the $\{u\}$-th aggregation event occurs at device $i$, gives us
\begin{equation*}
	{\Delta k}_i^u = \frac{r \rho_i}{L_\infty} \frac{\gamma^{(k)}}{\alpha^{(k)}}.
\end{equation*}

We are interested in the asymptotic behavior of ${\Delta k}_i^u$ as $k \to \infty$. $\lim_{k \to \infty}{{\Delta k}_i^u} = \infty$ implies that aggregation events become less frequent as time goes by and stop after a while. This contradicts Assumption \ref{assump:connectivity}-\ref{assump:conn:boundedintercom} (bounded intercommunication intervals) and hence should be avoided. There is no particular issue when $\lim_{k \to \infty}{{\Delta k}_i^u} = 0$ in terms of consensus, as it implies an aggregation occurs at every iteration after a while. However, we avoid this situation as it defeats our purpose of sporadic event-triggered communications. Therefore, we aim for having a finite constant value for $\lim_{k \to \infty}{{\Delta k}_i^u}$ (this constant is equal to $K$, which is the approximate number of iterations between aggregation events), and thus
\begin{equation*}
	K = \frac{r \rho_i}{L_\infty} \lim_{k \to \infty}{\frac{\gamma^{(k)}}{\alpha^{(k)}}} \Rightarrow \lim_{k \to \infty}{\frac{\gamma^{(k)}}{\alpha^{(k)}}} = \frac{K L_\infty}{r \rho_i}.
\end{equation*}
So, the decay rate of $\gamma^{(k)}$ and $\alpha^{(k)}$ should be the same.
We next substitute the value of $r$ derived in \eqref{eqn:rGuideline} to get
\begin{equation*}
	\lim_{k \to \infty}{\frac{\nicefrac{\gamma^{(k)}}{\gamma^{(0)}}}{\nicefrac{\alpha^{(k)}}{\alpha^{(0)}}}} = \frac{\rho}{\rho_i}.
\end{equation*}
Similar to the argument made in Remark \ref{remark:rGuideline}, since both $\gamma^{(k)}$ and $\alpha^{(k)}$ are global variables, we take the average of the relationship above to obtain
\begin{equation*}
	\lim_{k \to \infty}{\frac{\nicefrac{\gamma^{(k)}}{\gamma^{(0)}}}{\nicefrac{\alpha^{(k)}}{\alpha^{(0)}}}} = \rho \left( \frac1m \sum_{i=1}^m{\frac1{\rho_i}} \right) = 1.
\end{equation*}

\addtolength{\textheight}{-3cm}

\end{document}